\newtheorem{theorem}{Theorem}
\newcommand{\beq}{\begin{equation}}
\newcommand{\eeq}{\end{equation}}
\newcommand{\beqa}{\begin{eqnarray}}
\newcommand{\eeqa}{\end{eqnarray}}
\newcommand{\bit}{\begin{itemize}}
\newcommand{\eit}{\end{itemize}}
\newcommand{\ben}{\begin{enumerate}}
\newcommand{\een}{\end{enumerate}}
\newcommand{\mc}{\mathcal}
\newcommand{\mb}{\mathbb}
\newcommand{\bed}{\begin{displaymath}}
\newcommand{\eed}{\end{displaymath}}
\newtheorem{thm}{Theorem}
 \newtheorem{rem}[thm]{Remark}
\begin{document}

\title{Multi-Level Error-Resilient Neural Networks with Learning}

\author{\IEEEauthorblockN{Amir Hesam Salavati and Amin Karbasi}
\IEEEauthorblockA{School of Computer and Communication Sciences, Ecole Polytechnique Federale de Lausanne (EPFL)\\
Email: hesam.salavati@epfl.ch, amin.karbasi@epfl.ch,}}

\maketitle

\begin{abstract}
The problem of neural network association is to retrieve 
a previously memorized pattern from its noisy version using a
network of neurons. An ideal neural network should include three components simultaneously: a learning algorithm, a large pattern retrieval capacity and resilience against noise. Prior works in this area usually improve one or two aspects at the cost of the third. 

Our work takes a step forward in closing this gap. More specifically, we show that by forcing natural constraints on the set of learning patterns, we can drastically improve the retrieval capacity of our neural network. Moreover, we devise a learning algorithm whose role is to learn those patterns satisfying the above mentioned constraints. Finally we show that our neural network can cope with a fair amount of noise.

\end{abstract}

\section{Introduction}
Neural networks are famous for their ability to \emph{learn} and \emph{reliably} perform a required task. An important example is the case of (associative) memory where we are asked to memorize (learn) a set of given patterns. Later, corrupted versions of the memorized patterns will be shown to us and we have to return the correct memorized patterns. In essence, this problem is very similar to the one faced in communication systems where the goal is to reliably transmit and efficiently decode a set of patterns (so called codewords) over a noisy channel.

As one would naturally expect, reliability is certainly a very important issue both the in neural associative memories and in communication systems. Indeed, the last three decades witnessed many reliable artificial associative neural networks. See for instance \cite{hopfield}, \cite{venkatesh}, \cite{Jankowski}, \cite{Muezzinoglu1}, \cite{SKGS}, \cite{gripon_sparse}.

However, despite common techniques and methods deployed in both fields (e.g., graphical models, iterative algorithms, etc), there has been a quantitative difference in terms of another important criterion: the efficiency.ver the past decade, by using probabilistic graphical models in communication systems it has become clear that the number of patterns that can be reliably transmitted and efficiently decoded over a noisy channel is exponential in $n$, length of the codewords, \cite{urbanke}. However, using current neural networks of size $n$ to memorize a set of \emph{randomly} chosen patterns, the maximum number of patterns that can be reliably memorized scales linearly in $n$ \cite{mceliece}, \cite{venkatesh}.

% where patterns can be thought of codewords of a given code book. In short, the problem of interest is to "memorize" a set of vectors of length $n$ using a neural network, where the term memorize refers to ???the best pattern
%retrieval capacities that can be obtained in a neural network
%scale only linearly with the length of the patterns.

%Although there are lots of approaches in neuroscience community to accomplish the task of memorizing, they all lack in efficiency in the sense that the number of patterns that can be memorized and reliably recalled using these approaches is extremely limited. More specifically, for a network size of $n$ neurons, the number of patterns that can be memorized is $O(n)$ at best. 

There are multiple reasons for the inefficiency of the storage capacity of neural networks. First, neurons can only perform simple operations. As a result, most of the techniques used in communication systems (more specifically in coding theory) for achieving exponential storage capacity are prohibitive in neural networks. Second, a large body of past work (e.g., \cite{hopfield}, \cite{venkatesh}, \cite{Jankowski}, \cite{Muezzinoglu1}) followed a common assumption that a neural network should be able to memorize \emph{any} subset of patterns drawn randomly from the set of all possible vectors of length $n$. Although this assumption gives the network a sense of generality, it reduces its storage capacity to a great extent. 

An interesting question which arises in this context is whether one can increase the storage capacity of neural networks beyond the current linear scaling and achieve results similar to coding theory. To this end, Kumar et al. \cite{KSS} suggested a new formulation of the problem where only a suitable set of patterns was considered for storing. This way they could show that the performance of neural networks in terms of storage capacity increases significantly. Following the same philosophy, we will focus on memorizing a random subset of patterns of length $n$ such that the dimension of the training set is $k < n$. In other words, we are interested in memorizing a set of patterns that have a certain degree of \emph{structure} and \emph{redundancy}. We exploit this structure both to increase the number of patterns that can be memorized (from linear to exponential) and to increase the number of errors that can be corrected when the network is faced with corrupted inputs. 

The success of \cite{KSS} is mainly due to forming a bipartite network/graph (as opposed to a complete graph) whose role is to enforce the suitable constraints on the patterns, very similar to the role played by Tanner graphs in coding. More specifically, one layer is used to feed the patterns to the network (so called variable nodes in coding) and the other takes into account the inherent structure of the input patterns (so called check nodes in coding). A natural way to enforce structures on inputs is to assume that the connectivity matrix of the bipartite graph is orthogonal to all of the input patterns. However, the authors in \cite{KSS} heavily rely on the fact that the bipartite graph is fully known and given, and satisfies some sparsity and expansion properties. The expansion assumption is made to ensure that the resulting set of patterns are resilient against fair amount of noise. Unfortunately, no algorithm for finding such a bipartite graph was proposed.

Our main contribution in this paper is to relax the above assumptions while achieving better error correction performance. More specifically, we first propose an iterative algorithm that can find a sparse bipartite graph that satisfies the desired set of constraints. We also provide an upper bound on the block error rate of the method that deploys this learning strategy. We then proceed to devise a multi-layer network whose performance in terms of error tolerance improves significantly upon \cite{KSS} and no longer needs to be an expander. 

The remainder of this paper is organized as follows. In Section~\ref{sec:formulation} we formally state the problem that is the focus of this work, namely neural association for a network of non-binary neurons. We then provide an overview of the related work in this area in Section~\ref{sec:related}. We present our pattern learning algorithm in Section~\ref{sec:learning} and the multi-level network design in Section~\ref{sec:multi}. The simulations supporting our analytical results are shown in Section~\ref{sec:simul}. Finally future works are explained in Section~\ref{section:conclusions}.

%first computes the
%weighted sum
%\[ h = \sum_{i=1}^{n} w_i s_i, \]
%where $w_i$ denotes the weight of the input link from $s_i$, and then
%updates its state as $x = f(h),$
%where $f: \mb{R} \rightarrow \mc{S}$ is a possibly non-linear function
%from the field of real numbers $\mb{R}$ to $\mc{S}$.
\section{Problem Formulation}\label{sec:formulation}
In contrast to the mainstream work in neural associative memories, we focus on non-binary neurons, i.e., neurons that can assume a finite set of integer values $\mc{S} = \{
0,1,\dots,S-1\}$ for their states (where $S>2$). A natural way to interpret the multi-level states is to think of the short-term
(normalized) firing rate of a neuron as its output. 
Neurons can only perform simple operations. In particular, we restrict the
operations at each neuron to a \textit{linear summation} over the inputs, and
a possibly \textit{non-linear thresholding} operation. In particular, a neuron $x$ updates its state based on the states of
its neighbors $\{s_i\}_{i=1}^{n}$ as follows:
\begin{enumerate}
\item It computes the weighted sum
$ h = \sum_{i=1}^{n} w_i s_i,$ 
where $w_i$ denotes the weight of the input link from $s_i$.
\item It updates its state as $x = f(h),$
where $f: \mb{R} \rightarrow \mc{S}$ is a possibly non-linear function
from the field of real numbers $\mb{R}$ to $\mc{S}$.
\end{enumerate}
Neural associative memory aims to memorize
$C$ patterns of length $n$ by determining the weighted connectivity matrix of the neural network (\emph{learning phase}) such that the given patterns are stable states of the network. Furthermore, the network should be able to tolerate a fair amount of noise so that it can return the correct memorized pattern in response to a corrupted query (\emph{recall phase}). Among the networks with these two abilities, the one with largest $C$ is the most desirable. 

We first focus on learning the connectivity matrix of a neural graph which memorizes a set of patterns having some inherent redundancy. More specifically, we assume to have $C$ vectors of length $n$ with non-negative integer entries, where these patterns form a subspace of dimension $k < n$. We would like to memorize these patterns by finding a set of non-zero vectors $w_1,\dots,w_m \in \mb{R}^n$ that are orthogonal to the set of given patterns. Furthermore, we are interested in rather sparse vectors. Putting the training patterns in a matrix $\mathcal{X}_{C \times n}$ and focusing on one such vector $w$, we can formulate the problem as: 
\begin{subequations}\label{main_problem}
\beq\label{main_problem_obj}
\min \Vert \mathcal{X}\cdot w \Vert_2
\eeq
subject to
\beq\label{main_problem_const1}
\Vert w \Vert_0 \leq q \quad \hbox{and} \quad \Vert w \Vert_2^2 \geq \epsilon
\eeq
%and
%\beq\label{main_problem_const2}
%\Vert w \Vert_2^2 \geq \epsilon
%\eeq
\end{subequations}
where $q\in \mathbb{N}$ determines the degree of sparsity and $\epsilon\in\mathbb{R}^+$ prevents the all-zero solution.
A solution to the above problem yields a sparse bipartite graph which corresponds to the basis vectors of the null space specified by the patterns in the training set. In other words, the inherent structure of the patterns is captured in terms of $m$ linear constraints on the entries of the patterns $x^\mu$ in the training set. It can therefore be described by Figure \ref{single_level_net} with a connectivity matrix $W \in \mb{R}^{m \times n}$ such that $Wx^\mu = 0$ for all $\mu = 1,\dots,C$. 

In the recall phase, the neural network is fed with noisy inputs.
%inputs. The recall process can be described using the graph in
 A possibly noisy version of an input pattern is initialized as the states of the pattern neurons $x_1,x_2,\dots,x_n$. Here, we assume that the noise is integer valued and additive\footnote{It must be mentioned that neural states below $0$ and above $S$ will be set to $0$ and $S$, respectively.}. In formula, we have $y = W( x^\mu + z) = Wz$ where $z$ is the noise added to pattern $x^\mu$ and we used the fact that $Wx^\mu = \textbf{0}$. Therefore, one can use $y = Wz$ to eliminate the input noise $z$. Consequently, we are searching an algorithm that can provably eliminate the effect of noise and return the correct pattern.
\begin{rem}
 A solution in the learning/recall phase is acceptable only if it can be found by simple operations at neurons. 
\end{rem}

\begin{figure}[t]
\begin{center}
\includegraphics[width=.25\textwidth]{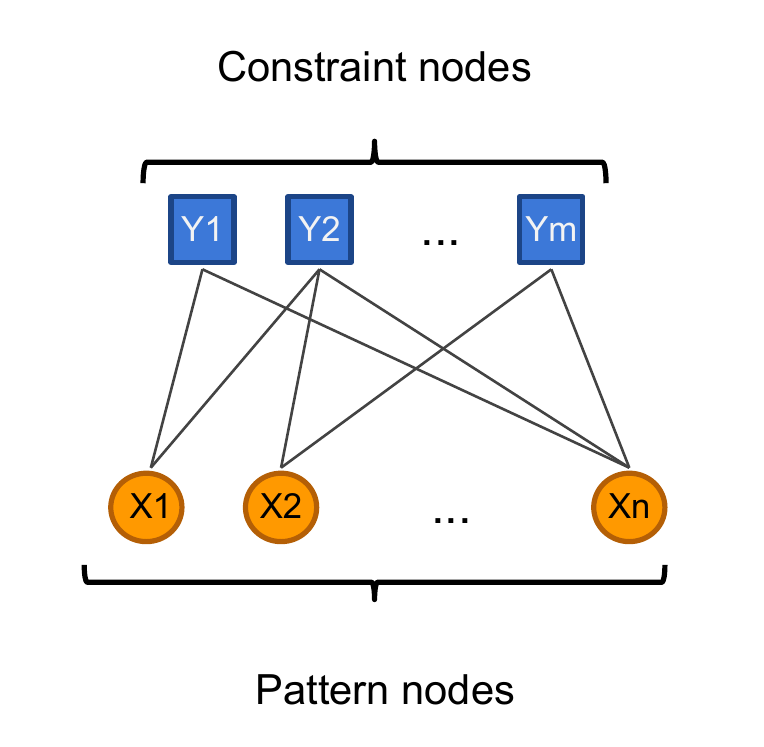}
\end{center}
\begin{center}
\vspace{-0.5cm}
\caption{A bipartite graph that represents the constraints on the training set.\label{single_level_net}}
\end{center}
\vspace{-1cm} 
\end{figure}

Before presenting our solution, we briefly overview the relation between the previous works and the one presented in this paper.
\subsection{Related Works}\label{sec:related}
Designing a neural network capable of learning a set of patterns and recalling them later in presence of noise has been an active topic of research for the past three decades. Inspired by the Hebbian learning rule \cite{hebb}, Hopfield in his seminal work \cite{hopfield} introduced the Hopfield network: an auto-associative neural mechanism of size $n$ with binary state neurons in which patterns are assumed to be binary vectors of length $n$. The capacity of a Hopfield network under vanishing bit error probability was later shown to be $0.13n$ by Amit et al. \cite{amit}. Later on, McEliece et al. proved that the capacity of Hopfield networks under vanishing block error probability requirement is $O(n/\log(n))$ \cite{mceliece}. Similar results were obtained for sparse regular neural network in \cite{Komlos}. It is also known that the capacity of neural associative memories could be enhanced if the patterns are \emph{sparse} in the sense that at any time instant many of the neurons are silent \cite{hertz}. However, even these schemes fail when required to correct a fair amount of erroneous bits as the information retrieval is not better compared to that of normal networks.

In addition to neural networks capable of learning patterns gradually, in \cite{venkatesh}, the authors calculate the weight matrix offline (as opposed to gradual learning) using the pseudo-inverse rule \cite{hertz} which in return help them improve the capacity of a Hopfield network to $n/2$ \emph{random patterns} with the ability of \textit{one bit} error correction. 

Due to the low capacity of Hopfield networks, extension of associative memories to non-binary neural models has also been explored in the past. Hopfield addressed the case of continuous neurons and showed that similar to the binary case, neurons with states between $-1$ and $1$ can memorize a set of random patterns, albeit with less capacity \cite{hopfield_non_binary}. In \cite{Jankowski} the authors investigated a multi-state complex-valued neural associative memories for which the estimated capacity is $C < 0.15 n$. Under the same model but using a different learning method, Muezzinoglu et al. \cite{Muezzinoglu1} showed that the capacity can be increased to $C = n$. However the complexity of the weight computation mechanism is prohibitive. 
To overcome this drawback, a Modified Gradient Descent learning Rule (MGDR) was devised in \cite{Lee}. 

Given that even very complex offline learning methods can not improve the capacity of binary or multi-sate Hopfield networks, a line of recent work has made considerable efforts to exploit the inherent structure of the patterns in order to increase both capacity and error correction capabilities. Such methods either make use of higher order correlations of patterns or focus merely on those patterns that have some sort of redundancy. As a result, they differ from previous methods for which every possible random set of patterns was considered. Pioneering this prospect, Berrou and Gripon \cite{gripon_sparse} achieved considerable improvements in the pattern retrieval capacity of Hopfield networks, by utilizing clique-based coding. In some cases, the proposed approach results in capacities of around $30n$, which is much larger than $O(n/\log(n))$ in other methods. In \cite{SKGS}, the authors used low correlation sequences similar to those employed in CDMA communications to increase the storage capacity of Hopfield networks to $n$ without requiring any separate decoding stage. 

In contrast to the pairwise correlation of the Hopfield model \cite{hopfield}, Peretto et al. \cite{peretto} deployed \emph{higher order} neural models: the state of the neurons not only depends on the state of their neighbors, but also on the correlation among them. Under this model, they showed that the storage capacity of a higher-order Hopfield network can be improved to $C=O(n^{p-2})$, where $p$ is the degree of correlation considered. The main drawback of this model was again the huge computational complexity required in the learning phase. To address this difficulty while being able to capture higher-order correlations, a bipartite graph inspired from iterative coding theory was introduced in \cite{KSS}. Under the assumptions that the bipartite graph is known, sparse, and expander, the proposed algorithm increased the pattern retrieval capacity to $C=O(a^n)$, for some $a > 1$. The main drawbacks in the proposed approach is the lack of a learning algorithm as well as the assumption that the weight matrix should be an expander. The sparsity criterion on the other hand, as it was noted by the authors, is necessary in the recall phase and biologically more meaningful.

In this paper, we focus on solving the above two problems in \cite{KSS}. We start by proposing an iterative learning algorithm that identifies a \emph{sparse} weight matrix $W$. The weight matrix $W$ should satisfy a set of linear constraints $Wx^\mu = 0$ for all the patterns $x^\mu$ in the training data set, where $\mu=1,\dots,C$. We then propose a novel network architecture which eliminates the need for the expansion criteria while achieving better performance than the error correction algorithm proposed in \cite{KSS}. 

Constructing a factor-graph model for neural associative memory has been also addressed in \cite{braunstein1}. However, there, the authors propose a general message-passing algorithm to memorize any set of random patterns while we focus on memorizing patterns belonging to subspaces with sparsity in mind as well. The difference would again be apparent in the pattern retrieval capacity (linear vs. exponential in network size).

Learning linear constraints by a neural network is hardly a new topic as one can learn a matrix orthogonal to a set of patterns in the training set (i.e., $Wx^\mu = 0$) using simple neural learning rules (we refer the interested readers to \cite{xu} and \cite{oja}). However, to the best of our knowledge, finding such a matrix subject to the sparsity constraints has not been investigated before. This problem can also be regarded as an instance of compressed sensing \cite{candes}, in which the measurement matrix is given by the big patterns matrix $\mc{X}_{C \times n}$ and the set of measurements are the constraints we look to satisfy, denoted by the tall vector $b$, which for simplicity reasons we assume to be all zero. Thus, we are interested in finding a sparse vector $w$ such that $\mc{X}w = 0$.

Nevertheless, many decoders proposed in this area are very complicated and cannot be implemented by a neural network using simple neuron operations. Some exceptions are \cite{donoho_amp} and \cite{tropp} from which we derive our learning algorithm.

%  There are lots of methods in compressive sensing to solve this problem. LASSO-based approaches look specially interesting as the problem formulation looks very similar to neural learning problems \cite{osborne}. However, we require such algorithms to be simple enough to be implemented by neural networks. That limits our span to simple iterative algorithms such as \cite{donoho_amp}, \cite{tropp}.

\section{Learning Algorithm}\label{sec:learning}
%Suppose we have $C$ vectors of length $n$ with non-negative integer entries, where these patterns form a subspace of dimension $k < n$. We would like to memorize these patterns by finding a set of non-zero vectors $w_1,\dots,w_m \in \mb{R}^n$ that are orthogonal to the set of given patterns. Furthermore, we are interested in rather sparse vectors. Putting the training patterns in a matrix $\mathcal{X}_{C \times n}$ and focusing on one such vector $w$, we can formulate the problem as: 
%
%\begin{subequations}\label{main_problem}
%\beq\label{main_problem_obj}
%\min \Vert \mathcal{X}.w \Vert_2
%\eeq
%subject to
%\beq\label{main_problem_const1}
%\Vert w \Vert_0 \leq q
%\eeq
%and
%\beq\label{main_problem_const2}
%\Vert w \Vert_2^2 \geq \epsilon
%\eeq
%\end{subequations}
%where $q$ determines the degree of sparsity and the constraint (\ref{main_problem_const2}) ensures that the algorithm will not converge to the all-zero solution.

We are interested in an iterative algorithm that is simple enough to be implemented by a network of neurons. Therefore, we first relax (\ref{main_problem}) as follows:
\beq\label{main_problem_modified}
\min \Vert \mathcal{X}\cdot w \Vert_2 - \lambda (\Vert w \Vert_2^2 - \epsilon) + \gamma(g(w) - q').
\eeq
In the above problem, we have approximated the constraint $\Vert w \Vert_0 \leq q$ with $g(w) \leq q'$ since $\Vert . \Vert_0$ is not a well-behaved function. The function $g(w)$ is chosen such that it favors sparsity. For instance one can pick $g(w)$ to be $\Vert . \Vert_1$, which leads to $\ell_1$-norm minimizations. In this paper, we consider the function
\begin{displaymath}
g(w) = \sum_{i = 1}^n \tanh(\sigma w_i^2)
\end{displaymath}
where $\sigma$ is chosen appropriately. By calculating the derivative of the objective function and primal-dual optimization techniques we obtain the following iterative algorithm for (\ref{main_problem_modified}):

%In order to find the solution to problem (\ref{main_problem_modified}), by calculating the derivative of the objective function and primal-dual optimization techniques we propose the following algorithm:
\begin{subequations}\label{proposed_algo_old}

\beq\label{proposed_algo_old_y}
y(t) = \frac{\mathcal{X}\cdot w(t)}{\Vert \mathcal{X} \Vert_2}
\eeq
\beq\label{proposed_algo_old_w}
w(t+1) = (1+2\lambda_t)w(t)-2\alpha_t \frac{\mathcal{X}^\top y(t)}{\Vert \mathcal{X} \Vert_2}- \gamma_t \nabla g(w)
\eeq
\beq\label{proposed_algo_old_lambda}
\lambda_{t+1} = \left[\lambda_t + \delta(\epsilon-\Vert w \Vert_2^2) \right]
\eeq
\beq\label{proposed_algo_old_gamma}
\gamma_{t+1} = \left[\gamma_t + \delta(g(w) - q') \right]
\eeq
\end{subequations}
where $t$ denotes the iteration number, $\mathcal{X}^\top$ is the transpose of matrix $\mathcal{X}$, $\delta$ and $\alpha_t$ are small step sizes and $[\cdot]_+$ denotes $\max(\cdot,0)$. 

For our choice of $g(w)$, the $i^{th}$ entry of the function $f(w) = \nabla g(w)$, denoted by $f_i(w)$ reduces to $2 \sigma w_i(1-\tanh(\sigma w_i^2)^2)$. For very small values of $w_i$, $f_i(w) \simeq w_i$ and for large values of $w_i$, $f_i(w) \simeq 0 $. Therefore, by looking at (\ref{proposed_algo_old_w}) we see that the last term is pushing small values in $w(t+1)$ towards zero while leaving the larger values intact. Therefore, we remove the last term completely and enforce small entries to zero in each update which in turn enforces sparsity. The final iterative learning procedure is shown in Algorithm~\ref{algo:learning}.
\begin{algorithm}[t]
\caption{Iterative Learning}
%Finding Target with Entropy plus the logarithm of the diameter of the constraint graph Search Queries}
\label{algo:learning}
\begin{algorithmic}
\REQUIRE{ pattern matrix $\mc{X}$, stopping point $p$.}
\ENSURE{$w$}
\WHILE{ $\Vert y(t) \Vert_{\max}>p$ }
\STATE Compute $y(t) = \frac{\mathcal{X}\cdot w(t)}{\Vert \mathcal{X} \Vert_2}.$
\STATE Update $w(t+1) = \eta \left((1+2\lambda_t)w(t)-2\alpha_t \frac{\mathcal{X}^\top y(t)}{\Vert \mathcal{X} \Vert_2} \right)_{\theta_t}$.
\STATE Update $\lambda_{t+1} = \left[\lambda_t + \delta(\epsilon-\Vert w \Vert_2^2) \right]$.
\STATE $t \leftarrow t+1$.
\ENDWHILE
\end{algorithmic}
\end{algorithm}

Here, $\theta_t$ is a positive threshold at iteration $t$ and $\eta(.)_{\theta_t}$ is the \emph{point-wise} soft-thresholding function given below:
\beq\label{soft_threshold_func}
\eta(u)_\theta = \left\{ \begin{array}{ll}
  u & \mbox{if $u > \theta$},\\
  u & \mbox{if $u < -\theta$},\\
  0 & \mbox{otherwise}.\end{array} \right.
\eeq
\begin{rem}
the above choice of soft-theresholding function is very similar to the one selected by Donoho et al. in \cite{donoho_amp} in order to recover a sparse signal from a set of measurements. The authors prove that their choice of soft-threshold function results in \emph{optimal} sparsity-undersampling trade-off.
\end{rem}

The next theorem derives the necessary conditions on $\alpha_t$, $\lambda_t$ and $\theta_t$ such that Algorithm \ref{algo:learning} converges to a sparse solution.

\begin{theorem}
If $\theta_t \rightarrow 0$ as $t \rightarrow \infty$ and if $\lambda_t$ is bounded above by $a_{\min}/(a_{\max}-a_{\min})$, then there is a proper choice of $\alpha_t$ in every iteration $t$ that ensures constant decrease in the objective function $\Vert \mathcal{X}.w(t)\Vert_{\max}$. Here $a_{\min} = \min_{\mu} \Vert x^\mu \Vert^2 / \Vert \mathcal{X} \Vert^2$ and $a_{\max} = \max_{\mu} \Vert x^\mu \Vert^2 / \Vert \mathcal{X} \Vert^2$. For $\lambda_t = 0$, i.e. $\Vert w(t) \Vert_2 \geq \epsilon$, picking $0 < \alpha_t<1$ ensures gradual convergence.
\end{theorem}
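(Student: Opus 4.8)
My plan is to track the residual vector $u(t) := \mathcal{X}\cdot w(t)\in\mathbb{R}^C$ instead of $w(t)$, since the objective is $\Vert\mathcal{X}w(t)\Vert_{\max}=\Vert u(t)\Vert_\infty$ (up to the fixed scaling by $\Vert\mathcal{X}\Vert_2$ that appears in $y(t)$). Substituting $y(t)=\mathcal{X}w(t)/\Vert\mathcal{X}\Vert_2$ into the un-thresholded part of Algorithm~\ref{algo:learning} gives $\tilde w(t+1)=(1+2\lambda_t)w(t)-2\alpha_t\,\mathcal{X}^\top\mathcal{X}w(t)/\Vert\mathcal{X}\Vert_2^2$, hence $\mathcal{X}\tilde w(t+1)=A_t\,u(t)$ with $A_t:=(1+2\lambda_t)I-2\alpha_t M$ and $M:=\mathcal{X}\mathcal{X}^\top/\Vert\mathcal{X}\Vert_2^2$. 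Here $M$ is symmetric positive semidefinite with $\lambda_{\max}(M)\le 1$, its $\mu$-th diagonal entry is $M_{\mu\mu}=\Vert x^\mu\Vert^2/\Vert\mathcal{X}\Vert_2^2\in[a_{\min},a_{\max}]$, and $a_{\max}\le 1$ because each row of $\mathcal{X}$ has norm at most $\Vert\mathcal{X}\Vert_2$. The soft-thresholding step only zeroes entries of $\tilde w(t+1)$ of magnitude at most $\theta_t$, so $w(t+1)=\tilde w(t+1)+e_t$ with $\Vert e_t\Vert_\infty\le\theta_t$ and thus $\Vert u(t+1)-A_t u(t)\Vert_\infty\le\Vert\mathcal{X}\Vert_{\infty\to\infty}\,\theta_t$, a perturbation that vanishes because $\theta_t\to 0$. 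So it suffices to choose $\alpha_t$ so that $A_t$ strictly decreases $\Vert\cdot\Vert_\infty$ by an amount that eventually dominates this vanishing error.

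The core of the argument is to pick $\alpha_t$ so that every diagonal entry of $A_t$, namely $1+2\lambda_t-2\alpha_t M_{\mu\mu}$, lies strictly inside $(-1,1)$. This holds for all $\mu$ exactly when $\lambda_t/M_{\mu\mu}<\alpha_t<(1+\lambda_t)/M_{\mu\mu}$ for all $\mu$, i.e. when $\alpha_t$ lies in $\big(\lambda_t/a_{\min},\ (1+\lambda_t)/a_{\max}\big)$. This interval is non-empty iff $\lambda_t a_{\max}<a_{\min}+\lambda_t a_{\min}$, i.e. $\lambda_t<a_{\min}/(a_{\max}-a_{\min})$ — precisely the hypothesis. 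When $\lambda_t=0$ the interval becomes $(0,1/a_{\max})\supseteq(0,1)$ since $a_{\max}\le1$, which is why any $0<\alpha_t<1$ works; indeed for $\lambda_t=0$ the update is ordinary gradient descent on the convex quadratic $\Vert\mathcal{X}w\Vert_2^2$ with step $2\alpha_t/\Vert\mathcal{X}\Vert_2^2<2/\lambda_{\max}(\mathcal{X}^\top\mathcal{X})$, so the standard descent lemma already gives monotone decrease, and the $\max$-norm statement follows from the diagonal bound.

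The remaining — and, I expect, the only delicate — step is to pass from ``the diagonal of $A_t$ contracts'' to ``$\Vert A_t u(t)\Vert_\infty<\Vert u(t)\Vert_\infty$,'' i.e. to control the off-diagonal entries $M_{\mu\nu}=\langle x^\mu,x^\nu\rangle/\Vert\mathcal{X}\Vert_2^2$. I would do this with a Gershgorin-type estimate: expanding the $\mu$-th component of $A_t u(t)$ and bounding $|u_\nu(t)|\le\Vert u(t)\Vert_\infty$ yields $\Vert A_t u(t)\Vert_\infty\le\big(\max_\mu|1+2\lambda_t-2\alpha_t M_{\mu\mu}|+2\alpha_t\max_\mu\sum_{\nu\neq\mu}|M_{\mu\nu}|\big)\Vert u(t)\Vert_\infty$, which is a genuine contraction once the patterns are incoherent enough that the off-diagonal mass $\max_\mu\sum_{\nu\neq\mu}|M_{\mu\nu}|$ is below the slack $1-\max_\mu|1+2\lambda_t-2\alpha_t M_{\mu\mu}|$ left by the diagonal choice (the typical situation for the structured/random ensembles considered, and one that simply pushes $\alpha_t$ into the interior of its admissible interval). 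Alternatively, one can read ``a proper choice of $\alpha_t$ in every iteration'' literally as a per-iteration line search: $\alpha\mapsto\Vert A_t(\alpha)u(t)\Vert_\infty$ is convex and piecewise linear, equals $(1+2\lambda_t)\Vert u(t)\Vert_\infty$ at $\alpha=0$, and has a negative one-sided derivative there whenever $s_{\mu^\star}(Mu(t))_{\mu^\star}>0$ for every maximizing index $\mu^\star$ — again a diagonal-dominance condition — so a strictly smaller value is achieved at some $\alpha_t>0$. Combined with the vanishing thresholding error, monotone strict decrease of $\Vert u(t)\Vert_\infty$ follows; I would state the Gershgorin estimate as the main lemma since it makes the roles of $a_{\min}$, $a_{\max}$ and the bound on $\lambda_t$ transparent, while noting that the case $\lambda_t=0$, $0<\alpha_t<1$ needs no off-diagonal control at all.
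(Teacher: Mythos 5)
Your proposal follows essentially the same route as the paper's proof: both track the residual $y(t)=\mathcal{X}w(t)/\Vert\mathcal{X}\Vert_2$, write the un-thresholded update as multiplication by the matrix $D_t=(1+2\lambda_t)I-2\alpha_t\mathcal{X}\mathcal{X}^\top/\Vert\mathcal{X}\Vert_2^2$ (your $A_t$), absorb the soft-thresholding as an additive perturbation of order $\theta_t$ that vanishes by hypothesis, and derive the admissible window $\alpha_t\in\left(\lambda_t/a_{\min},\,(1+\lambda_t)/a_{\max}\right)$ from the requirement that the diagonal entries $1+2\lambda_t-2\alpha_t M_{\mu\mu}$ lie in $(-1,1)$, with non-emptiness equivalent to the stated bound $\lambda_t<a_{\min}/(a_{\max}-a_{\min})$. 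The one point of genuine divergence is precisely the step you single out as delicate. The paper passes from $E(t+1)\le\Vert D_t y(t)\Vert_{\max}+\theta_t$ to $E(t+1)\le\Vert D_t\Vert_{\max}E(t)+\theta_t$ using the \emph{entrywise} max norm of $D_t$ and then only verifies $|D_{ii}|<1$ on the diagonal; as you observe, this is not a valid submultiplicative bound (the induced norm for $\Vert\cdot\Vert_{\max}$ on vectors is the maximum absolute row sum, so the off-diagonal entries $\langle x^\mu,x^\nu\rangle/\Vert\mathcal{X}\Vert_2^2$ must be controlled). Your Gershgorin estimate is the honest way to close this, but it requires an incoherence condition on the patterns that appears neither in the theorem statement nor in the paper's argument, and your line-search alternative likewise rests on a diagonal-dominance hypothesis; be explicit that this is an added assumption rather than a consequence of the hypotheses. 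In short, your write-up is structurally identical to the paper's but more rigorous where it matters: you have correctly located the gap that the paper's own proof silently steps over. Your treatment of the $\lambda_t=0$ case, where $(0,1)\subseteq(0,1/a_{\max})$ because $a_{\max}\le 1$, matches the paper's conclusion.
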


\begin{proof}[Sketch of the proof] Let $E(t) = \Vert y(t) \Vert_{\max}$. We would like 
Let $E(t) = \Vert y(t) \Vert_{\max}$. We would like to show that $E(t+1) < E(t)$ for all iterations $t$. To this end, let us denote $(1+2\lambda_t)w(t)-2\alpha_t \frac{\mathcal{X}^T y}{\Vert \mathcal{X} \Vert_2}$ by $w'(t)$. Furthermore, let the function $\chi(u;\theta_t)$ be $u-\eta(u)_{\theta_t}$. Rewriting the second step of algorithm (\ref{algo:learning}) we will have:
\beq
w(t+1) = w'(t) - \chi(w'(t);\theta_t)
\eeq

Now we have 
\beqa\label{proof1}
E(t+1) &=& \Vert y(t+1) \Vert_{\max} = \Vert \frac{\mathcal{X}.w(t+1)}{\Vert \mathcal{X} \Vert_2} \Vert_{\max}\nonumber\\
%&=& \Vert \frac{\mathcal{X}.w'(t)}{\Vert \mathcal{X} \Vert} - \frac{\mathcal{X}.\chi(w'(t);\theta_t)}{\Vert \mathcal{X} \Vert_2}\Vert_{\max}\nonumber\\
&\leq& \Vert \frac{\mathcal{X}.w'(t)}{\Vert \mathcal{X} \Vert_2} \Vert_{\max} + \Vert \frac{\mathcal{X}.\chi(w'(t);\theta_t)}{\Vert \mathcal{X} \Vert_2}\Vert_{\max}\nonumber\\
%&\leq& \Vert \frac{\mathcal{X}.w'(t)}{\Vert \mathcal{X} \Vert_2} \Vert_{\max} + \frac{\Vert \mathcal{X} \Vert_{\max} \Vert \chi(w'(t);\theta_t)\Vert_{\max}}{\Vert \mathcal{X} \Vert_2}\nonumber\\
&\leq& \Vert \frac{\mathcal{X}.w'(t)}{\Vert \mathcal{X} \Vert_2} \Vert_{\max} + \theta_t \frac{\Vert \mathcal{X} \Vert_{\max}}{\Vert \mathcal{X} \Vert_2}\nonumber\\
&\leq& \Vert \frac{\mathcal{X}.w'(t)}{\Vert \mathcal{X} \Vert_2} \Vert_{\max} + \theta_t 
\eeqa
where the last inequality follows because $\Vert \mathcal{X} \Vert_{\max} \leq \Vert \mathcal{X} \Vert_2$. Now expanding $\frac{\mathcal{X}.w'(t)}{\Vert \mathcal{X} \Vert}$ we will get 
\beqa\label{eq_expand1}
\frac{\mathcal{X}.w'(t)}{\Vert \mathcal{X} \Vert_2} &=& (1+2\lambda_t)y(t)-2\alpha_t \frac{\mathcal{X} \mathcal{X}^T y}{\Vert \mathcal{X} \Vert_2^2}\nonumber \\ 
&=& \left[(1+2\lambda_t)I_{C\times C} -2\alpha_t \frac{\mathcal{X} \mathcal{X}^T}{\Vert \mathcal{X} \Vert_2^2} \right] y(t)
\eeqa
Denoting the matrix $(1+2\lambda_t)I_{C\times C} -2\alpha_t \frac{\mathcal{X} \mathcal{X}^T}{\Vert \mathcal{X} \Vert_2^2}$ by $D_t$, we can further simplify inequality (\ref{proof1}):
\beqa\label{proof2}
E(t+1) &\leq& \Vert D_t y(t) \Vert_{\max} + \theta_t \nonumber \\
&\leq& \Vert D_t\Vert_{\max} \Vert y(t) \Vert_{\max} + \theta_t \nonumber \\
&=& \Vert D_t\Vert_{\max} E(t) + \theta_t 
\eeqa
Where $D(t) = (1+2\lambda_t)I_{C\times C} -2\alpha_t \frac{\mathcal{X} \mathcal{X}^T}{\Vert \mathcal{X} \Vert_2^2}$. Therefore, if we set $\theta_t = 1/t$ (i.e. $\theta_t \rightarrow 0$ as $t \rightarrow \infty$) and ensuring $\Vert D_t\Vert_{\max} < 1$ we get $E(t+1) < E(t)$. The second requirement requires that $|D_{ij}(t)| <1$ for all elements $(i,j)$ of $D_t$. Therefore, by letting $A = XX^T$ we must have the following relationship for diagonal elements:
\beqa\label{alpha_amp_ineq1}
|1+2\lambda_t - 2\alpha_t \frac{A_{ii}}{\Vert A\Vert_2}| < 1 
\eeqa
which yields
\bed
\lambda_t < \alpha_t \frac{A_{ii}}{\Vert A\Vert_2} < 1+\lambda_t,\ \forall i=1,\dots C
\eed
Since $\lambda_t \geq 0$ for all $t$ and $0 < A_{ii} \leq \Vert A \Vert_2$, the right hand side of the above inequality is satisfied if $\alpha_t < 1$. The left-hand side is satisfied for $\alpha_t > \lambda_t /a_{\min}$, where $a_{\min} = \min_i A_{ii}/\Vert A\Vert_2$. Therefore, if $\lambda_t \leq a_{\min}/(a_{\max}-a_{\min})$ there exists and $\alpha_t$ ensuring $\Vert D \Vert_{max} < 1$. If $\lambda_t = 0$, this is simply equivalent to having $0 < \alpha < 1$.

\end{proof}

\section{Multi-level Network Architecture}\label{sec:multi}
In the previous section, we discussed the details of a simple iterative learning algorithm which yields rather sparse graphs. Now in the recall phase, we propose a network structure together with a simple error correction algorithm (similar to the one in \cite{KSS}) to achieve good block error rates in response to noisy input patterns. The suggested network architecture is shown in Figure~\ref{multi_level_net}.
To make the description clear and simple we only concentrate on a two-level neural network. However, the generalization of this idea is trivial and left to the reader. 

\begin{figure}[t]
\begin{center}
\includegraphics[width=.35\textwidth]{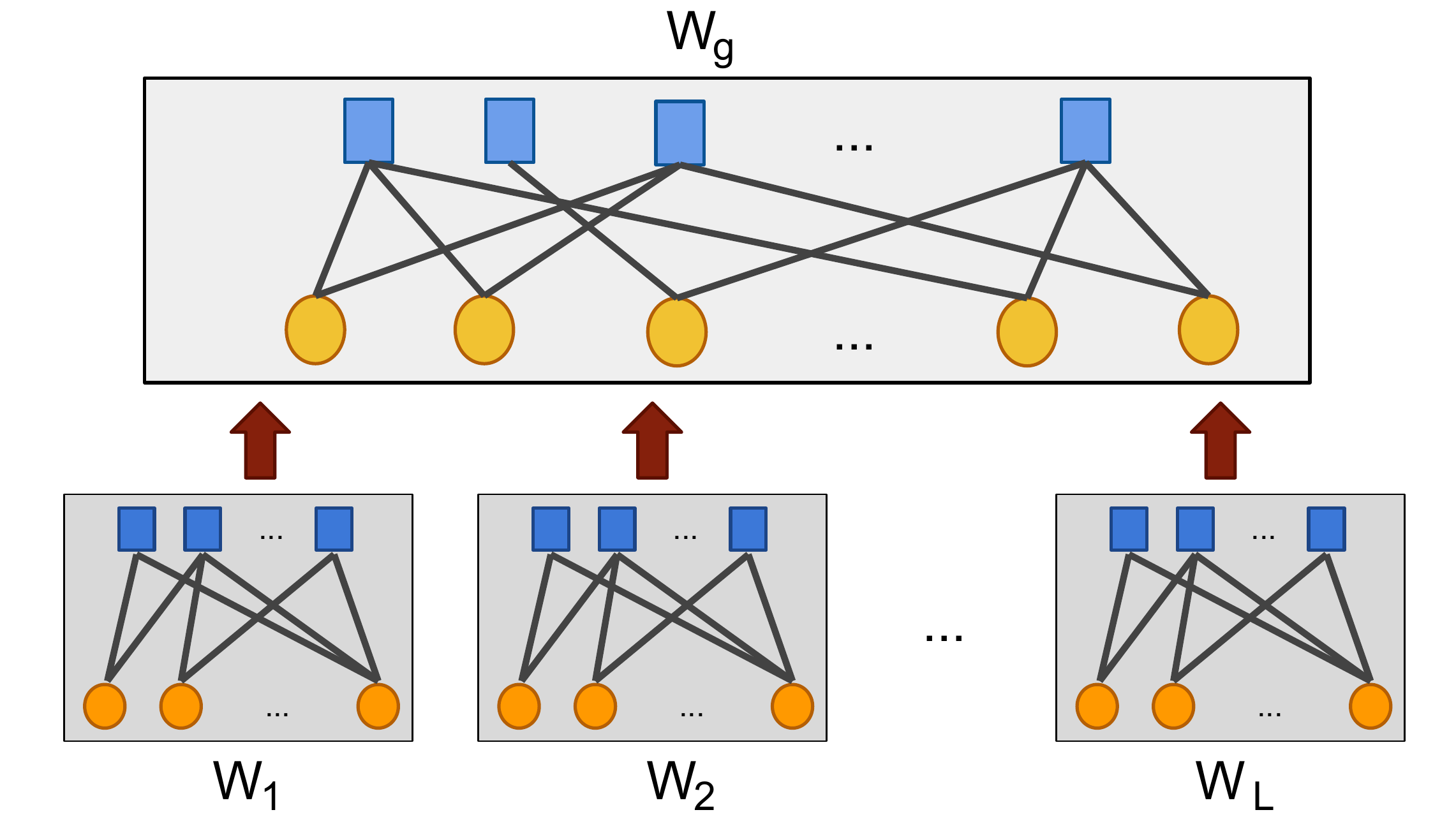}
\end{center}
\begin{center}
\vspace{-0.5cm}
\caption{A two-level error correcting neural network. \label{multi_level_net}}
\end{center}
\vspace{-0.5cm}
\end{figure}

The proposed approach is in contrast to the one in suggested in \cite{KSS} where the authors exploit a single-level neural network with a sparse and \emph{expander} connectivity graph to correct at least two initial input errors. However, enforcing expansion on connectivity graphs in a gradual neural learning algorithm is extremely difficult, specially when the algorithm is required to be very simple %??? Reference 
Therefore, we use the learning algorithm explained above, which yields a \emph{rather} sparse and not necessarily expander graph, and improve the error correction capabilities by modifying the network structure and error correcting algorithm.

\begin{algorithm}[t]
\caption{Error Correction}
%Finding Target with Entropy plus the logarithm of the diameter of the constraint graph Search Queries}
\label{algo:correction}
\begin{algorithmic}[1]
\REQUIRE{ pattern matrix $\mc{X}$, threshold $\varphi$, iteration $t_{\max}$}
\ENSURE{$x_1,x_2,\dots,x_n$}
\FOR{$t = 1 \to t_{\max}$} 
\STATE \textit{Forward iteration:} Calculate the weighted input sum $ h_i = \sum_{j=1}^n W_{ij} x_j,$ for each neuron $y_i$ and set:
\[ y_i = \left\{
\begin{array}{cc}
1, & h_i < 0\\
0, & h_i = 0\\
-1, & \hbox{otherwise}\footnotemark
\end{array} \right. .
\]
\STATE \textit{Backward iteration:} Each neuron $x_j$ computes
\[ g_j = \frac{\sum_{i=1}^m W_{ij} y_i }{\sum_{i = 1}^m |W_{ij}|}. \]
\STATE Update the state of each pattern neuron $j$ according to $x_j = x_j + \hbox{sgn}(g_j)$
only if $|g_j| > \varphi$.
 \STATE $t \gets t + 1$
\ENDFOR
\end{algorithmic}
Note that in practice, we replace the condition $h_i = 0$ and $h_i>0$ with $|h_i| < \varepsilon$ and $h_i > \varepsilon$ for some small positive number $\varepsilon$.
\end{algorithm}
\footnotetext{Note that although we do not allow neurons to have negative outputs, the set of outputs $\{-1,0,1\}$ can be easily implemented by sending $\{0,1,2\}$ and shift the response in the pattern neurons in each iteration. The shifting can be done by modifying the firing threshold for each neuron.}

The idea behind this new architecture is that we divide the input pattern of size $n$ into $L$ sub-patterns of length $n/L$. Now we feed each sub-pattern to a neural network which enforces $m$ constraints\footnote{The number of constraints for different networks can vary. For simplicity of notifications we assume equal sizes.} on the sub-pattern in order to correct the input errors. The local networks in the first level and the global network in the second level use Algorithm~\ref{algo:correction}, which is a variant of the "bit-flipping" method proposed in \cite{KSS}, to correct the errors. Intuitively, if the states of the pattern neurons $x_i$ correspond to a
pattern from $\mathcal{X}$ (i.e., the noise-free case), then for
 all $i=1,\ldots,m$ we have $y_i=0$.
The quantity $g_j$ can be interpreted as feedback to pattern neuron $x_j$ from the constraint neurons. Hence, the sign of $g_j$ provides an indication of the sign of the noise that affects $x_j$, and $|g_j|$
indicates the confidence level in the decision regarding the sign of
the noise.

\begin{theorem}
Algorithm~\ref{algo:correction} can correct a single error in the input pattern with high probability if $\varphi$ is chosen large enough.
\end{theorem}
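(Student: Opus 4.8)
The plan is to follow exactly what one round of Algorithm~\ref{algo:correction} computes when the input is $x=x^\mu+z$ with the noise $z$ supported on a single coordinate $j^\ast$, and then to split the analysis into two parts: the behaviour of the feedback $g_{j^\ast}$ at the corrupted neuron, and the behaviour of $g_j$ at every clean neuron $j\neq j^\ast$. The claim will follow if we can show $|g_{j^\ast}|=1$ with $\mathrm{sgn}(g_{j^\ast})$ opposing the injected noise, while $|g_j|$ is bounded away from $1$ for all $j\neq j^\ast$, so that a threshold $\varphi$ strictly between these two values both triggers the correct update at $j^\ast$ and leaves all clean neurons untouched.

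For the forward iteration, since $Wx^\mu=\mathbf{0}$ the weighted sum at constraint neuron $i$ collapses to $h_i=\sum_j W_{ij}z_j = W_{ij^\ast}z_{j^\ast}$, so the rule gives $y_i=-\mathrm{sgn}(W_{ij^\ast})\,\mathrm{sgn}(z_{j^\ast})$ for $i$ in the neighbourhood $N(j^\ast)=\{i: W_{ij^\ast}\neq 0\}$ and $y_i=0$ otherwise. Substituting into the backward rule and using $W_{ij^\ast}\mathrm{sgn}(W_{ij^\ast})=|W_{ij^\ast}|$ gives
\[
g_{j^\ast}=\frac{\sum_{i\in N(j^\ast)}W_{ij^\ast}\bigl(-\mathrm{sgn}(W_{ij^\ast})\mathrm{sgn}(z_{j^\ast})\bigr)}{\sum_{i\in N(j^\ast)}|W_{ij^\ast}|}=-\mathrm{sgn}(z_{j^\ast}),
\]
so $|g_{j^\ast}|=1$, and the update $x_{j^\ast}\gets x_{j^\ast}+\mathrm{sgn}(g_{j^\ast})$ moves $x_{j^\ast}$ one step toward $x^\mu_{j^\ast}$ (correcting the error in one round when $|z_{j^\ast}|=1$), provided $\varphi<1$.

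For a clean neuron $j\neq j^\ast$, only constraints in $N(j)\cap N(j^\ast)$ enter the numerator of $g_j$, because $y_i=0$ off $N(j^\ast)$ while $W_{ij}=0$ off $N(j)$; hence
\[
|g_j|=\frac{\bigl|\sum_{i\in N(j)\cap N(j^\ast)}W_{ij}\,\mathrm{sgn}(W_{ij^\ast})\bigr|}{\sum_{i\in N(j)}|W_{ij}|}\le\frac{\sum_{i\in N(j)\cap N(j^\ast)}|W_{ij}|}{\sum_{i\in N(j)}|W_{ij}|},
\]
so $g_j$ is governed by the fraction of neuron $j$'s incident weight that sits on constraints shared with $j^\ast$. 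The remaining step is to argue that sparsity of the learned matrix $W$ forces this overlap fraction below some $\rho<1$ simultaneously for all $j\neq j^\ast$: modelling the support of $W$ as a sparse bipartite graph with bounded degrees (the regime produced by Algorithm~\ref{algo:learning}), one has $\mathbb{E}\,|N(j)\cap N(j^\ast)|$ of order $d_jd_{j^\ast}/m=o(d_j)$, and a Chernoff/Hoeffding bound on $\sum_i\mathbf{1}[i\in N(j)]\mathbf{1}[i\in N(j^\ast)]$, combined with near-uniformity of the weight magnitudes, gives $|g_j|\le\rho$ with probability $1-o(1/n)$; a union bound over the $n-1$ clean neurons then yields the ``with high probability'' qualifier. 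Any $\varphi\in(\rho,1)$ then works.

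I expect the genuinely delicate point to be this last one: the statement hides a probabilistic model, and one must commit to a model of randomness for $W$ under which the neighbourhood-overlap ratios concentrate below a threshold bounded away from $1$. If $W$ were adversarial one could have $N(j)\subseteq N(j^\ast)$ with sign-aligned weights and $|g_j|=1$, so the high-probability claim and the sparsity of $W$ are doing the essential work; pinning down the exact degree and weight assumptions (and, strictly, verifying that Algorithm~\ref{algo:learning} outputs a graph meeting them) is where the real content lies, whereas the exact evaluation of $g_{j^\ast}$ and the bound on $g_j$ are routine.
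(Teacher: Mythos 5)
Your decomposition is the same as the paper's: an exact evaluation showing the corrupted neuron receives feedback of magnitude $1$ with the correct sign (so it is always updated toward the true value), followed by a probabilistic bound on the event that a clean neuron's feedback exceeds the threshold, driven by the overlap of its constraint neighbourhood with that of the corrupted neuron. Where you diverge is in the choice of $\varphi$ and hence in which tail event must be controlled. The paper pushes $\varphi$ up to (essentially) $1$, so that a clean neuron $x_i$ is falsely updated only when $W_i\cdot \mathrm{sign}(z_1 W_1)=\Vert W_i\Vert_0$, i.e.\ only when its neighbourhood coincides with that of the noisy neuron; under the implicit random sparse-graph model this coincidence has probability $(\bar d/m)^{d_{x_i}}$, and averaging over the degree distribution gives $P_c\ge 1-\lambda(\bar d/m)\ge 1-(\bar d/m)^{d_{\min}}$. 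You instead fix some $\rho<1$, take $\varphi\in(\rho,1)$, and must show that the \emph{partial} overlap ratio $\sum_{i\in N(j)\cap N(j^\ast)}|W_{ij}|/\sum_{i\in N(j)}|W_{ij}|$ concentrates below $\rho$ simultaneously for all clean $j$ --- a strictly stronger statement that additionally requires near-uniformity of the weight magnitudes and a Chernoff-plus-union-bound argument you only sketch. The paper's choice makes the bad event far more restrictive and the bound elementary (no concentration needed, only the probability of exact coincidence), at the price of operating at the boundary $\varphi\approx 1$; your choice would be more robust to threshold selection but is harder to close rigorously. You are right that the ``with high probability'' claim hides an unstated randomness model for the support of $W$ --- the paper relies on it just as implicitly as you do --- and your sign bookkeeping at the corrupted neuron is in fact more careful than the paper's.
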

\begin{proof}
In the case of a single error, we are sure that the corrupted node will always be updated towards the correct direction. For simplicity, let's assume the first pattern neuron is the noisy one. Furthermore, let $z = \{1,\dots,0\}$ be the noise vector. Denoting the $i^{th}$ column of the weight matrix by $W_i$, we will have $y = \hbox{sign}(W_1)$. Then in algorithm \ref{algo:correction} $g_1 = 1 > \varphi$. This means that the noisy node gets updated towards the correct direction.

Therefore, the only source of error would be a correct node gets updated mistakenly. Let $P_{x_i}$ denote the probability that a correct pattern neuron $x_i$ gets updated. This happens if $|g_{x_i}|> \varphi$. For $\varphi = 1$, this is equivalent to having $W_i\cdot \hbox{sign}(z_1 W_1) = \Vert W_i \Vert_0$. Note that $W_i\cdot \hbox{sign}(W_1) < \Vert W_i \Vert_0$ in cases that the neighborhood of $x_i$ is different from the neighborhood of $x_1$ among the constraint nodes. More specifically, in the case that $\mc{N}(x_i) \neq \mc{N}(x_1)$, there are non-zero entries in $W_i$ while $W_1$ is zero and vice-versa. Therefore, letting $P'_{x_i}$ being the probability of $\mc{N}(x_i) \neq \mc{N}(x_1)$, we note that
\bed
P_{x_i} \leq P'_{x_i}
\eed
Therefore, to get an upper bound on $P_{x_i}$, we bound $P'_{x_i}$.

Let $\lambda_i$ be the fraction of pattern neurons with degree $i$, $\bar{d} = \sum_i \lambda_i d_i$ be the average degree of pattern neurons and finally $d_{min}$ be the minimum degree of pattern neurons. Then, we know that a noisy pattern neuron is connected to $\bar{d}$ constraint neurons on average. Therefore, the probability of $x_i$ and $x_1$ share exactly the same neighborhood would be:
\beq
P'_{x_i} = \left(\frac{\bar{d}}{m} \right)^{d_{x_i}}
\eeq

Taking the average over the pattern neurons, we have
\beqa
P'_e &=& \hbox{Pr}\{x \in \mc{C}_t \}\mb{E}_{d_{x_i}}\{P'_{x_i}\} \nonumber \\
&=& (1-\frac{1}{n}) \lambda(\frac{\bar{d}}{m})\nonumber \\
&=& \lambda(\frac{\bar{d}}{m})
\eeqa
where $\mc{C}_t$ is the set of correct nodes at iteration $t$ and $\lambda(x) = \sum_i \lambda_i x^i$.

Therefore, the probability of correcting one noisy input, $P_c = 1-P_e \geq 1-P'_e$ would be
\beqa\label{probability_correction}
P_c &\geq& 1- \lambda(\frac{\bar{d}}{m}) \nonumber \\
&\geq& 1- \left(\frac{\bar{d}}{m} \right)^{d_{min}}
\eeqa
\end{proof}

Given that each local network is able to correct one pattern, $L$ such networks can correct $L$ input errors \emph{if} they are separated such that only one error appears in the input of each local network. Otherwise, there would be a probability that the network could not handle the errors. In that case, we feed the overall pattern of length $n$ to the second layer with the connectivity matrix $W_g$, which enforces $m_g$ \emph{global} constraints. And since the probability of correcting two erroneous nodes increases with the input size, we expect to have a better error correction probability in the second layer. Therefore, using this simple scheme we expect to gain a lot in correcting errors in the patterns. In the next section, we provide simulation results which confirm our expectations and show that the block error rate can be improved by a factor of $100$ in some cases.

\subsection{Some remarks}
First of all, one should note that the above method only works if there is some redundancy at the global level as well. If the set of weight matrices $W_1,\dots,W_L$ define completely separate sub-spaces in the $n/L$-dimensional space, then for sure we gain nothing using this method. 

%However, if there are redundancies in the sub-spaces, we hope that the global constraints become helpful in the error correction process. A simple extreme is the case of repetition, where the subspaces look exactly the same. In this case, different sub-patterns should be equal to one another. 

Secondly, there is no need to have the dimension of the subspaces to be equal to each other. We can have different lengths for the sub-patterns belonging to each subspace and different number of constraints for that particular sub-space. This gives us more degree of freedom as well since we can spend some time to find the optimal length of each sub-pattern for a particular training data set. 

Thirdly, the number of constraints for the second layer affects the gain one obtains in the error performance. Intuitively, if the number of global constraints is large, we are enforcing more constraints so we expect obtaining a better error performance. We can think of determining the number even adaptively, i.e. if the error performance that we are getting is unacceptable, we can look deeper in patterns to identify their internal structure by searching for more constraints. This would be a subject of our future research.

\section{Pattern Retrieval Capacity}\label{sec:capacity}
the following theorem will prove that the proposed neural architecture is capable of memorizing an exponential number of patterns.
\begin{theorem}
Let $\mc{X}$ be a $C \times n$ matrix, formed by $C$ vectors of length $n$ with non-negative integers entries between $0$ and $S-1$. Furthermore, let $k_g = rn$ for some $0<r<1$. Then, there exists a set of such vectors for which $C = a^{rn}$, with $a > 1$, and $\hbox{rank}(\mc{X}) = k_g <n$, and such that they can be memorized by the neural network given in figure \ref{multi_level_net}.
\end{theorem}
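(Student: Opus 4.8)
The plan is to construct the pattern matrix $\mc{X}$ explicitly by exhibiting a subspace of dimension $k_g = rn$ that contains exponentially many vectors with entries in $\{0,1,\dots,S-1\}$, and then to argue that the two-level network of Figure~\ref{multi_level_net} can be built to have exactly these vectors as stable states. First I would recall that what the network enforces on its patterns is a set of linear constraints: the global weight matrix $W_g$ and the local matrices $W_1,\dots,W_L$ satisfy $W_g x^\mu = 0$ and $W_\ell x^\mu_{(\ell)} = 0$ for all stored patterns. So it suffices to pick any $k_g$-dimensional subspace $\mc{V} \subseteq \mb{R}^n$, take $W_g$ to be a matrix whose rows span $\mc{V}^\perp$ (a space of dimension $n-k_g = (1-r)n > 0$, so $W_g$ is a genuine non-trivial constraint matrix), and likewise choose the sub-patterns to live in coordinate-restricted subspaces whose direct sum refines $\mc{V}$.

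The heart of the argument is the counting step: I must show some $k_g$-dimensional subspace contains at least $a^{rn}$ lattice points with coordinates in $\mc{S} = \{0,\dots,S-1\}$, with $a>1$. The clean way is to take $\mc{V}$ to be spanned by $k_g$ vectors arranged so that $k_g$ of the $n$ coordinates are ``free'' and the remaining $n-k_g$ are determined as non-negative integer combinations that stay in range. Concretely, following the style of \cite{KSS}, one can let the free coordinates each range over a set of size at least $2$ (e.g. $\{0,1\}$, which requires only $S \geq 2$ after appropriate scaling, or a larger alphabet sub-range if the dependent coordinates need more room), so that the number of valid patterns is at least $2^{k_g} = 2^{rn} = (2^r)^n$; setting $a = 2$ (or any base realized by the construction) gives $C = a^{rn}$ with $a>1$. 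I would spell out a specific generator matrix — for instance a block form where an identity block on the free coordinates is concatenated with a fixed $0/1$ (or small-integer) block chosen so every non-negative combination of the basis vectors with $0/1$ free coefficients has all entries in $\mc{S}$ — and verify $\hbox{rank}(\mc{X}) = k_g$ by noting the free-coordinate submatrix already has full rank $k_g$.

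Next I would connect this subspace to the multi-level architecture: partition the $n$ coordinates into $L$ blocks, let each local subspace be the projection (or an independently chosen structured subspace) on its block, and let $W_g$ encode the remaining global linear dependencies so that the intersection of ``locally consistent'' and ``globally consistent'' patterns is exactly $\mc{V} \cap \mc{S}^n$. Then invoke the earlier machinery: the learning algorithm (Algorithm~\ref{algo:learning}, justified by Theorem~1) produces sparse $W_\ell$ and $W_g$ orthogonal to the patterns, and Theorem~2 / Theorem~3 guarantee these patterns are stable states (noise-free input yields $y_i = 0$ everywhere, so no pattern neuron is updated). Hence all $C = a^{rn}$ vectors are memorized.

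The main obstacle I anticipate is making the counting and the range constraint genuinely compatible: if the dependent coordinates are sums of several free coordinates they can exceed $S-1$, so one must either bound the number of free coordinates feeding each dependent one (a bounded-degree / sparsity condition on the generator, which dovetails nicely with the sparsity theme of the paper) or shrink the per-coordinate alphabet used for the free variables. Getting an honest $a>1$ while simultaneously respecting $\hbox{rank} = rn$, the box constraint $\mc{S}^n$, and the decomposability into $L$ local subspaces is the delicate part; everything else (rank computation, stability of stored patterns, existence of the constraint matrices) is routine given the earlier results. I would therefore devote most of the proof to exhibiting one explicit generator matrix and checking these three properties for it, then remark that the two-level decomposition is obtained by grouping its columns.
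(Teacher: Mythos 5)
Your proposal follows essentially the same route as the paper's proof: construct a non-negative integer generator matrix $G$ of rank $k_g=rn$ (with the local block structure for the $L$ sub-networks), form the patterns as $x^\mu = u^\mu\cdot G$ for integer coefficient vectors $u^\mu$ with entries in $\{0,\dots,\upsilon-1\}$, and count $\upsilon^{k_g}=\upsilon^{rn}$ of them; the ``main obstacle'' you flag is resolved in the paper exactly as you anticipate, by bounding the maximum column weight $d^*$ of $G$ and requiring $S-1 \geq d^*(\gamma-1)(\upsilon-1)$. Your explicit identity-block generator and rank check are a concrete instance of the paper's slightly more general construction, so the argument is sound and matches.
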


\begin{proof}
The proof is based on construction: we construct a data set $\mc{X}$ with the required properties, namely the entries of patterns should be non-negative, patterns should belong to a subspace of dimension $k_g <n$ and each sub-pattern of length $n/L$ belongs to a subspace of dimension $k < n/L$.

To start, consider a matrix $G \in \mb{R}^{k_g \times n}$ with rank $k_g$ and $k_g = rn$, with $0 < r < 1$. Let the entries of $G$ be non-negative integers, between $0$ and $\gamma-1$, with $\gamma \geq 2$. Furthermore, let $G_1,\dots,G_L$ be the $l$ sub-matrices of $G$, where $G_i$ comprises of the columns $1+(i-1)L$ to $iL$ of $G$. Finally, assume \emph{in each sub-matrix} we have exactly $k$ non-zero rows with $k<n/L$. 

We start constructing the patterns in the data set as follows: consider a random vector $u^\mu \in \mb{R}^{k_g}$ with integer-valued-entries between $0$ and $\upsilon-1$, where $\upsilon \geq 2$.  We set the pattern $x^\mu \in \mc{X}$ to be $x^\mu = u^\mu \cdot G$, \emph{if} all the entries of $x^\mu$ are between $0$ and $S-1$. Obviously, since both $u^\mu$ and $G$ have only non-negative entries, all entries in $x^\mu$ are non-negative. Therefore, it is the $S-1$ upper bound that we have to worry about. 

The $j^{th}$ entry in $x^\mu$ is equal to $x_j^\mu = u^\mu \cdot \mb{g}_j$, where $\mb{g}_j$ is the $j^{th}$ column of $G$. Suppose $\mb{g}_j$ has $d_j$ non-zero elements. Then, we have: 
\bed
x_j^\mu = u^\mu \cdot \mb{g}_j \leq d_j (\gamma-1) (\upsilon-1)
\eed

Therefore, denoting $d^* = \max_{j} d_j$, we could choose $\gamma$, $\upsilon$ and $d^*$ such that
\beq\label{capaci_inequal}
S-1 \geq d^* (\gamma-1) (\upsilon-1)
\eeq
to ensure all entries of $x^\mu$ are less than $S$. 

As a result, since there are $\upsilon^{k_g}$ vectors $u$ with integer entries between $0$ and $\upsilon -1$, we will have $\upsilon^{k_g} = \upsilon^{rn}$ patterns forming $\mc{X}$. Which means $C = \upsilon^{rn}$, which would be an exponential number in $n$ if $\upsilon \geq 2$. 
\end{proof}

\section{Simulation Results}\label{sec:simul}
We have simulated the proposed learning algorithm in the multi-level architecture to investigate the block error rate of the suggested approach and the gain we obtain in error rates by adding a second level. We constructed $4$ local networks, each with $n/4$ pattern and $m$ constraint nodes. 

\subsection{Learning Phase}
We generated a sample data set of $C = 10000$ patterns of length $n$ where each block of $n/4$ belonged to a subspace of dimension $k < n/4$. Note that $C$ can be an exponential number in $n$. However, we selected $C = 10000$ as an example to show the performance of the algorithm because even for small values of $k$, and exponential number in $k$ will become too large to handle numerically. The result of the learning algorithm is four different local connectivity matrices $W_1,\dots,W_4$ as well as a global weight matrix $W_g$. The number of local constraints was $m = n/4 -k$ and the number of global constraints was $m_g = n-k_g$, where $k_g$ is dimension of the subspace for overall pattern. The learning steps are done until $99\%$ of the patterns in the training set converged. Table \ref{simul_param_learn} summarizes other simulation parameters.
\begin{table}[h]
\label{simul_param_learn}
\caption{Simulation parameters}
\begin{tabular}{ |c| c| c| c|c|c| }
  \hline                       
  Parameter & $\delta$ & $\theta_t$ & $\alpha_t$ (when $\lambda_t\neq 0$) & $\epsilon$ & $p$\\ \hline
  Value & $10$ & $0.25/t$ & $ \min(\frac{\lambda_t}{a_{\min}},1+\lambda_t)$ & $0.01$ & $\frac{0.01}{\Vert X \Vert_2} $ \\
  \hline  
\end{tabular}
\end{table}
For cases where $\lambda_t = 0$, $\alpha_t$ was fixed to $0.49$. 

Table \ref{table_learn_itr_sparsity} shows the average number of iterations executed before convergence is reached for different constraint nodes at the local and global level. It also gives the average sparisty ratio for the columns of matrix $W$. The sparsity ratio is defined as $\rho = \kappa/n$, where $\kappa$ is the number of non-zero elements. From the figure one notices that as $n$ increases, the vectors become sparser.

\begin{table}[h]
\label{table_learn_itr_sparsity}
\caption{Average number of convergence iterations and sparsity in the local and global networks for $n=400$}
\begin{center}
\begin{tabular}{c|c| c| c|c| }
 \cline{2-5}   
 & \multicolumn{2}{|c|}{Sparsity Ratio} & \multicolumn{2}{|c|}{Convergence Rate } \\ \cline{2-5}   
 & $k_g = 100$ & $k_g=200$ & $k_g=100$ & $k_g=200$\\ \hline
 \multicolumn{1}{|c|}{Local} & $0.28$ & $0.32$ & $4808$ & $5064$ \\ \hline 
 \multicolumn{1}{|c|}{Global} & $0.22$ & $0.26$ & $14426$ & $33206$\\
 \hline 
\end{tabular}
\end{center}
\end{table}

%\begin{figure}[t]
%\begin{center}
%\includegraphics[width=.4\textwidth]{amp_learn_itr_90_300_600.png}
%\end{center}
%\begin{center}
%\caption{Convergence rate for different constraint nodes and different values of network sizes: $n = 360,1200$.\label{amp_learn_itr}}
%\end{center}
%\end{figure}

%\begin{figure}[t]
%\begin{center}
%\includegraphics[width=.45\textwidth]{amp_sparisty_percentage_90_300_600.png}
%\end{center}
%\begin{center}
%\caption{The percentage of trials with the specified sparsity measure and different values of network sizes: $n = 360,1200$. The sparsity measure is defined as $\rho = \kappa/n$, where $\kappa$ is the number of non-zero elements.\label{amp_sparisty_percentage}}
%\end{center}
%\end{figure}

%Figure \ref{amp_learning_cost} shows the MSE, given by the $\Vert y(t) \Vert_{\max}$ in algorithm \ref{algo:learning}, in each iteration for a sample trial for different values of $n$. Overall, the MSE is decreasing. 
%\begin{figure}[t]
%\begin{center}
%\includegraphics[width=.45\textwidth]{amp_learning_cost_90_300_600.png}
%\end{center}
%\begin{center}
%\vspace{-0.5cm}
%\caption{MSE as a function of iteration for a random trial node and three different values of $n$.\label{amp_learning_cost}}
%\end{center}
%\vspace{-0.5cm}
%\end{figure}
\subsection{Recall Phase}
For the recall phase, in each trial we pick a pattern randomly from the training set, corrupt a given number of its symbols with $\pm 1$ noise and use the suggested algorithm to correct the errors. As mentioned earlier, the errors are corrected first at the local and the at the global level. When finished, we compare the output of the first and the second level with the original (uncorrupted) pattern $x$. A pattern error is declared if the output does not match at each stage. Table \ref{simul_param_recall} shows the simulation parameters in the recall phase. 

\begin{table}[h]
\begin{center}
\label{simul_param_recall}
\caption{Simulation parameters}
\begin{tabular}{ |c| c| c|c| }
  \hline                       
  Parameter & $\varphi$ & $t_{\max}$ & $\varepsilon$ \\ \hline
  Value & $0.8$ & $ 20 \Vert z \Vert_0 $ & $0.01$ \\
  \hline  
\end{tabular}
\end{center}
\end{table}

Figure \ref{PER_full_redund} illustrates the pattern error rates $n = 400$ with two different values of $k_g=100$ and $k_g = 200$. The results are also compared to that of the bit-flipping algorithm in \cite{KSS} to show the improved performance of the proposed algorithm. As one can see, having a larger number of constraints at the global level, i.e. having a smaller $k_g$, will result in better pattern error rates at the end of the second stage. Furthermore, note that since we stop the learning after $99\%$ of the patterns had learned, it is natural to see some recall errors even for $1$ initial erroneous node. 
\begin{figure}[h]
\begin{center}
\includegraphics[width=.36\textwidth]{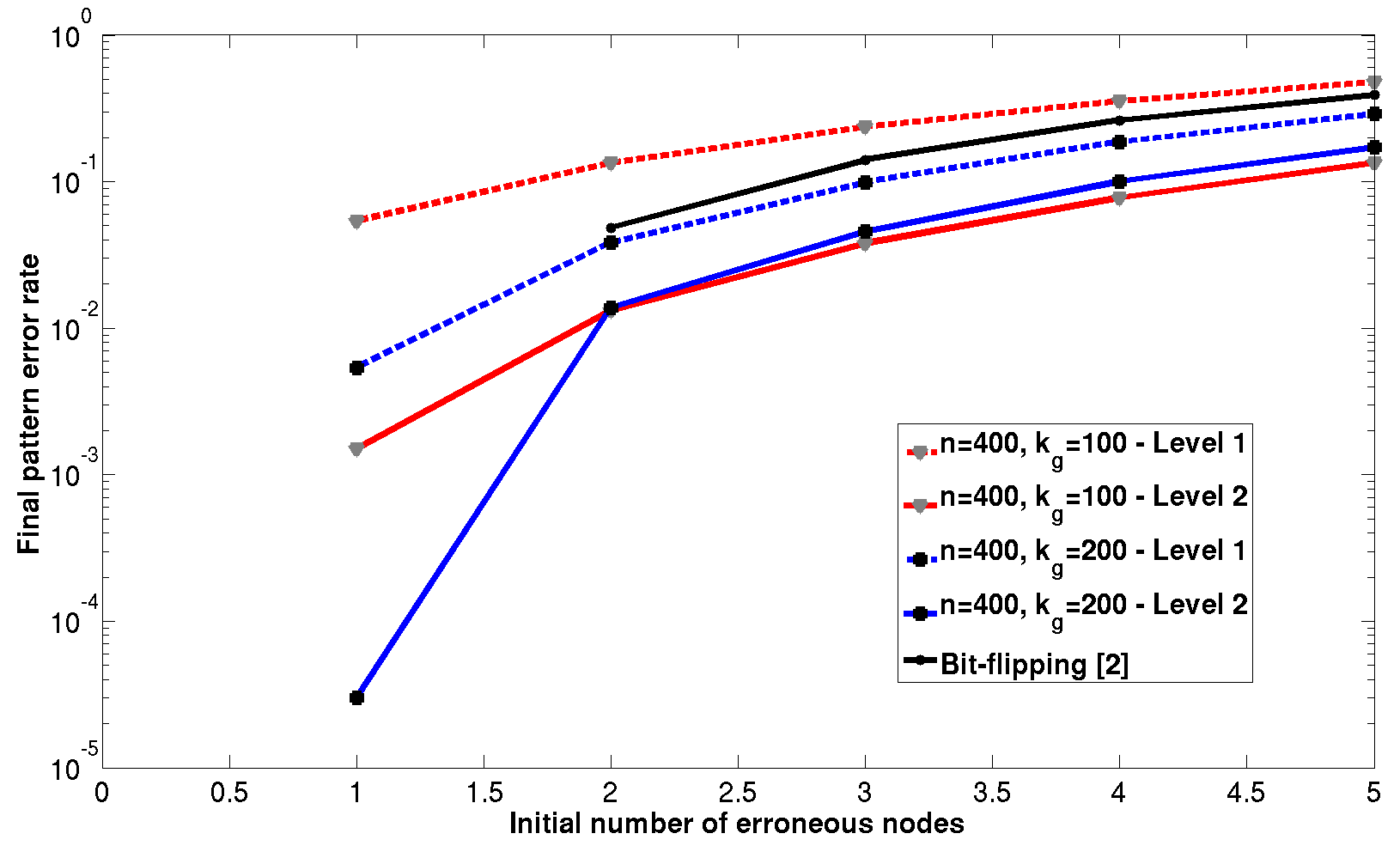}
\end{center}
\begin{center}
\vspace{-0.5cm}
\caption{Pattern error rate against the initial number of erroneous nodes \label{PER_full_redund}}
\end{center}
\vspace{-0.5cm}
\end{figure}

Table \ref{gain_error_rate} shows the gain we obtain by adding an additional second level to the network architecture. The gain is calculated as the ratio between the pattern error rate at the output of the first layer and the pattern error rate at the output of the second layer. 
\begin{table}[h]
\label{gain_error_rate}
\caption{Gain in Pattern Error Rate (PER) for different values of $n=400$ and initial number of errors}
\begin{center}
\begin{tabular}{ |p{2cm} | p{2cm}| p{2cm}| }
 \hline            
 Number of initial errors & Gain for $k_g=100$ & Gain for $k_g=200$ \\ \hline
 $2$ & $10.2$ & $2.79$ \\ 
 $3$ & $6.22$ & $2.17$ \\ 
 $4$ & $4.58$ & $1.88$ \\
 $5$ & $3.55$ & $1.68$ \\
 \hline 
\end{tabular}
\end{center}
\end{table}
%\begin{figure}[h]
%\begin{center}
%\includegraphics[width=.45\textwidth]{error_rate_gain.png}
%\end{center}
%\begin{center}
%\caption{The gain in bit and pattern error rates due to addition of a second layer \label{error_rate_gain}}
%\end{center}
%\end{figure}
\subsection{Comparison with Previous Work}
For the sake of completeness, table \ref{table_comparison} compares the proposed algorithm with the previous work from three different perspectives: the pattern retrieval capacity, the number of initial errors that can be corrected in the recall phase (denoted by $e$), the existence of an online iterative learning algorithm, and if there are any other restrictions such as the focus of the algorithm on particular patterns with some redundancy. In all cases it is assumed that the length of patterns is $n$.
\ctable[
caption = Neural associative memories compared together for a pattern of size $n$,
label = table_comparison,
pos = h,
]{|c|c|c|c|c|}{
\tnote[a]{The authors do not provide exact relationship for the pattern retrieval capacity. However, they show that for a particular setup with $n=2048$, we have $C = 60000$.}
\tnote[b]{PWR stands for Patterns With Redundancy.} 
\tnote[c]{$p$ is the order of correlations considered among patterns.}
\tnote[d]{$a>1$ is determined according to network parameters.}
\tnote[e]{EG stands for Expander Graphs.} 
}{
\hline 
 Algorithm & $C$ & $e$  & Learning? & Restrictions?\\ 
 \hline
 \cite{hopfield} & $O\left(n/\log(n)\right)$ & $O(n)$ & yes & no\\ 
 \hline
 \cite{venkatesh} & $n/2$ & $> 1$ & no & no\\ 
 \hline
 \cite{Jankowski} & $0.15n$ & $> 1$ & no & no\\ 
 \hline 
 \cite{Muezzinoglu1} & $n$ & $> 1$ & no & no\\ 
 \hline 
 \cite{gripon_sparse} & $>n$\tmark[a] & $> 1$ & yes & PWR\tmark[b]\\ 
 \hline 
 \cite{peretto} & $O\left(n^{p-2}\right)\tmark[c]$ & $> 1$ & yes & no\\ 
 \hline 
 \cite{KSS} & $O(a^n)$\tmark[b] & $> 2$ & no & PWR\tmark[d], EG\tmark[e]\\ 
 \hline 
 This paper & $O(a^n)$\tmark[b] & $> 1$ & yes & PWR\tmark[d]\\ 
 \hline  
 }

\section{Future Works}\label{section:conclusions}
In order to extend the multi-level neural network, we must first find a way to generate patterns that belong to a subspace with dimensions $nL-m_g$, where $m_g$ lies \emph{within} the inside of bounds $L(n-k)<m_g<nL-k$. This will give us a way to investigate the trade off between the maximum number of memorizable patterns and the degree of error correction possible. 

Furthermore, so far we have assumed that the second level enforces constraints in the same space. However, it is possible that the second level imposes a set of constraints in a totally different space. For this purpose, we need a mapping from one space into another. A good example is the written language. While they are local constraints on the spelling of the words, there are some constraints enforced by the grammar or the overall meaning of a sentence. The latter constraints are not on the space of letters but rather the space of grammar or meaning. Therefore, in order to for instance to correct an error in the word $\_at$, we can replace $\_$ with either $h$, to get \emph{hat}, or $c$ to get \emph{cat}. Without any other clue, we can not find the correct answer. However, let's say say we have the sentence "\emph{The $\_$at ran away}". Then from the constraints in the space of meaning we know that the subject must be an animal or a person. Therefore, we can return \emph{cat} as the correct answer. Finding a proper mapping is the subject of our future work.

\section*{Acknowledgment}

The authors would like to thank Prof. Amin Shokrollahi for helpful comments and discussions. This work was supported by Grant 228021-ECCSciEng of the European Research Council.

\begin{small}

\end{small}
\end{document}